\documentclass[11pt]{article}
\usepackage[margin=1in]{geometry}
\usepackage{amsmath,amssymb,amsthm}
\usepackage{booktabs}
\usepackage{enumitem}
\usepackage{graphicx}
\usepackage[round,authoryear]{natbib}
\usepackage{setspace}
\usepackage{hyperref}
\usepackage[capitalise,nameinlink]{cleveref}

\newtheorem{theorem}{Theorem}
\newtheorem{lemma}{Lemma}
\newtheorem{corollary}{Corollary}
\newtheorem{assumption}{Assumption}
\crefname{assumption}{Assumption}{Assumptions}
\Crefname{assumption}{Assumption}{Assumptions}

\title{Regime-Calibrated Fleet Repositioning with a Spatial Queue-Regret Decomposition}
\author{
  Indar Kumar \and
  Akanksha Tiwari
}
\date{May 2026}

\begin{document}
\onehalfspacing
\maketitle

\begin{abstract}
Ride-hailing and autonomous mobility-on-demand operators reposition idle supply
before future demand is fully observed. We study a retrieval-calibrated
predict-then-optimize approach for this problem: historical demand regimes are
matched to the current query block, combined into a calibrated demand prior,
and passed to a fleet-balancing controller. The paper makes three contributions.
First, we train a leakage-safe similarity gate whose objective penalizes demand
error, pickup spatial mismatch, and queue shortage risk rather than retrieval
rank alone. Second, we develop a spatial queue-regret decomposition for a stable
queueing surrogate, linking demand-field error to wait through queueing
sensitivity, allocator sensitivity, and Wasserstein pickup mismatch. Third, we
evaluate learned retrieval and external-style rebalancing baselines in a common
simulator. In the calibrated-demand gate experiment, across eight New York City
scenarios and ten seeds, the spatial gate reduces mean wait to 82.3s, compared
with 85.3s for hand-tuned similarity and 85.8s for a distributional-only
baseline. In a separate replay-demand controller comparison, a scenario
chance-MPC analog and a share-target transportation LP improve on Wen-style
rebalancing (92.2s/92.2s vs. 100.1s), a reduced GPR chance-MPC comparator is
intermediate at 94.4s, and an oracle MPC diagnostic is 91.3s.
\end{abstract}

\noindent\textbf{Keywords:} fleet repositioning; predict-then-optimize;
regime retrieval; queueing; Wasserstein distance; ride-hailing.

\section{Introduction}
\label{sec:intro-v2}

Fleet repositioning is a recurring operational problem in ride-hailing,
delivery, and mobility-on-demand systems. Operators must decide where idle
vehicles should wait before the next wave of requests materializes. Forecasting
alone is insufficient: the forecast has value only through the downstream
matching and repositioning decision it induces, a central theme in
predict-then-optimize and prescriptive-analytics systems
~\citep{bertsimas2020predictive,elmachtoub2022smart}.

This paper studies a regime-calibrated approach. Given a query block, the method
retrieves similar historical demand regimes, constructs a calibrated prior over
time and space, and uses that prior in an optimization-based fleet balancing
controller. The emphasis is methodological: what estimation errors matter for
repositioning, how similarity weights can be learned without regime leakage, and
how the resulting controller compares with external-style baselines in one
simulator.

The contributions are:
\begin{enumerate}[leftmargin=*]
  \item A leakage-safe learned similarity gate over regime-matching components,
  selected by downstream wait rather than retrieval rank alone.
  \item A spatial queue-regret decomposition for a stable queueing surrogate,
  linking demand estimation error, allocation sensitivity, pickup Wasserstein
  mismatch, and expected wait.
  \item Simulator-consistent external-style baselines, including Wen-style
  historical rebalancing, share-target LP, scenario and reduced GPR-style
  chance-MPC analogs, time-boxed contextual DQN, and oracle MPC.
\end{enumerate}

\begin{figure}[t]
\centering
\includegraphics[width=0.92\linewidth]{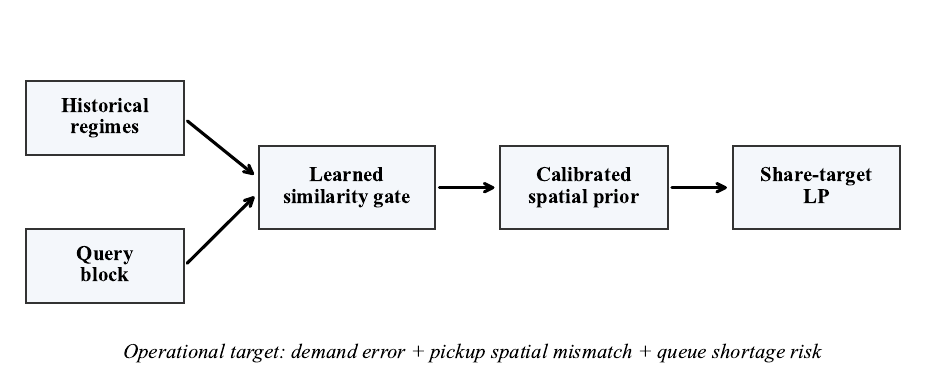}
\caption{Method overview: learned regime retrieval constructs a calibrated
spatial prior that drives a share-target transportation LP.}
\label{fig:method-v2}
\end{figure}

\section{Related Work}
\label{sec:related-v2}

\textbf{Fleet repositioning and matching.}
Online matching and dispatch for ride-hailing systems have been studied through
dynamic assignment, batching, and stochastic matching models
~\citep{lowalekar2018online,bei2018algorithms,alonsoMora2017ondemand}. Fleet
repositioning work commonly balances idle supply against predicted demand using
fluid, optimization, or MPC-style controllers
~\citep{spieser2016shared,wen2017rebalancing,iglesias2018data,wallar2018vehicle}.
Foundational AMoD analyses use closed queueing networks and network-flow
formulations to characterize rebalancing and fleet sizing
~\citep{pavone2012robotic,zhang2016control,iglesias2019bcmp}. Our share-target
LP follows this optimization tradition but is evaluated with the same simulator
and demand blocks as the learned calibration method.
Closest in spirit is recent smart predict-then-optimize work for vehicle
rebalancing, which trains a forecasting model against downstream decision
regret in a regional MILP framework~\citep{guo2026smart}. Our focus is
different: we learn a leakage-safe retrieval gate over historical regimes and
use spatial queue-regret terms to define the retrieval target.

\textbf{Learning for fleet operations.}
Deep RL has been applied to large-scale fleet management and order dispatch
~\citep{lin2018efficient,tang2019deep}. Our experiments include a contextual-DQN
warm start and a prior PPO negative result, but we do not position RL as the
paper's contribution. The main result comes from retrieval-calibrated
prediction plus deterministic optimization.

\textbf{Demand forecasting and regime retrieval.}
Spatio-temporal demand forecasting is often modeled with deep sequence or graph
architectures~\citep{yao2018deep,li2018diffusion}. We instead ask a narrower
operational question: which historical regimes should be retrieved as a prior
for repositioning? Regime models and Wasserstein comparisons motivate our
historical-block matching and spatial mismatch diagnostics
~\citep{hamilton1989new,ramdas2017wasserstein,kumar2026rg}.

\section{Problem Setting}
\label{sec:problem-v2}

The city is partitioned into zones $\mathcal{Z}=\{1,\ldots,Z\}$. During a
finite horizon, requests arrive with non-stationary spatial intensity
$\lambda^\star(t)$. At each repositioning epoch, the operator observes recent
demand, pending requests, and idle vehicle locations, but not future arrivals.
The decision is a feasible movement plan for idle vehicles. The objective is to
reduce rider wait while preserving completion rate and avoiding excessive empty
travel.

The information structure is predict-then-optimize. A calibrated prior
$\hat{\lambda}$ is constructed from historical regimes before the repositioning
LP is solved. The evaluation fixes scenario blocks and random seeds, so methods
differ only in the prior and controller they provide to the same simulator.

\section{Regime-Calibrated Method}
\label{sec:method-v2}

\subsection{Regime Library}

Historical trips are split into four-hour regime blocks. Each block stores a
five-minute demand series, event annotations, temporal context, summary
features, and a sampled pickup/dropoff spatial pool. A query block is compared
with library blocks using six similarity components: Kolmogorov--Smirnov,
Wasserstein, summary-feature distance, variance similarity, event-pattern
similarity, and temporal proximity.

\subsection{Learned Similarity Gate}

A hand-tuned version used static similarity weights. The revised method trains a
gating MLP
\[
  g_\theta(x_q)=\mathrm{softmax}(\mathrm{MLP}(x_q))\in\Delta^6
\]
from query features to similarity weights. The split is leakage-safe: all
historical blocks sharing the evaluation month, day type, and hour range are
held out of gate training. The initial pairwise gate improved retrieval rank
but worsened downstream wait, which showed that rank correlation is not enough.
The final gate uses a spatially regularized objective combining demand
$\ell_1$ error, pickup spatial transport mismatch, and stylized queue gap.
This design choice is empirical as well as theoretical: an initial gate trained
only for retrieval rank improved held-out rank correlation but worsened
simulator wait, whereas the spatially regularized gate is selected by the
frozen downstream validation protocol.

\subsection{Share-Target LP Controller}

For replay-demand external baselines, the main deterministic controller is a
share-target transportation LP. Given a demand-share prior $p$, current idle
supply $s$, and total idle count $I$, the controller targets
$I p_z$ idle vehicles in each zone and solves a min-cost transportation problem
from surplus zones to deficit zones. This reformulation is more stable than
comparing idle supply directly with raw near-term demand counts when total
future demand exceeds movable idle supply.
We also include two uncertainty-aware analogs: a scenario chance-MPC controller
that treats retrieved regimes as empirical demand scenarios, and a reduced GPR
chance-MPC comparator that smooths per-zone residual uncertainty before solving
the same transportation LP. These are simulator-native comparators rather than
full reproductions of any specific chance-constrained MPC implementation.


\section{Spatial Queue-Regret Theory}
\label{sec:theory-v2}

This section formalizes why the calibration problem is operational rather than
purely predictive. A demand estimate can be close in aggregate volume yet still
place supply in the wrong zones. The bound below separates temporal-volume
error, allocation instability, and spatial pickup mismatch.

The result is a surrogate decomposition for one stable planning horizon. It is
not a proof that the full simulator is globally stable or dynamically optimal
under repeated closed-loop decisions. Its purpose is to identify the operational
error terms a retrieval gate should penalize. Closed-loop behavior is validated
empirically in \Cref{sec:results-v2}, where the simulator includes finite
fleets, stochastic demand realizations, batch matching, travel times, and
repeated repositioning.

Let $\mathcal{Z}=\{1,\ldots,Z\}$ be the zone partition for one planning horizon.
The true arrival-rate vector is
$\lambda^\star\in\mathbb{R}_+^Z$ and the calibrated estimate is
$\hat{\lambda}\in\mathbb{R}_+^Z$. Let
$N=\sum_z\lambda^\star_z>0$. A repositioning allocator maps an estimate to an
integer capacity vector $A(\hat{\lambda})\in\mathbb{Z}_+^Z$, where
$A_z(\hat{\lambda})$ is the number of effective servers allocated to zone $z$.
For analysis, zone $z$ is approximated by an $M/M/c$ queue with service rate
$\mu$ per server, using the standard Erlang-C approximation for many-server
service systems~\citep{gross2008queueing,whitt2002stochastic}. Queueing models
are also common in mobility-on-demand and ride-share platform analyses
~\citep{banerjee2015pricing,pavone2012robotic,zhang2016control,iglesias2019bcmp}. Let
$W_z(c,\lambda)$ denote the Erlang-C mean queueing wait for zone $z$ and define
the demand-weighted system wait
\[
  \bar{W}(c,\lambda)
  =
  \sum_{z=1}^Z
  \frac{\lambda_z}{\sum_j \lambda_j}
  W_z(c_z,\lambda_z).
\]
Let $p^\star=\lambda^\star/N$ and
$\hat{p}=\hat{\lambda}/\sum_j\hat{\lambda}_j$ denote spatial demand
distributions. The spatial distance
$W_1^d(\hat{p},p^\star)$ is the 1-Wasserstein distance under a pickup-time
ground metric $d$.

\begin{assumption}[Stable operating set]
\label{ass:stable-set}
There is a fixed $\rho_{\max}<1$ such that all allocations compared in this
section satisfy
\[
  \lambda_z/(A_z(\lambda)\mu)\leq \rho_{\max}
  \qquad\text{for every active zone }z.
\]
Zones with zero arrival rate are omitted from the weighted sum.
\end{assumption}

\begin{assumption}[Finite active capacity set]
\label{ass:finite-capacity}
For the horizon and fleet size under study, active zone capacities lie in a
finite set $\{1,\ldots,c_{\max}\}$. This is automatic for a finite fleet and a
finite zone partition.
\end{assumption}

\begin{assumption}[Local allocator sensitivity]
\label{ass:allocator-sensitivity}
For the demand region induced by the calibration procedure, the allocator has
finite local $\ell_1$ sensitivity:
\[
  \|A(\hat{\lambda})-A(\lambda^\star)\|_1
  \leq L_A\|\hat{\lambda}-\lambda^\star\|_1 .
\]
This is the standard right-hand-side perturbation condition for a locally
nondegenerate transportation LP; when the active basis is unchanged, $L_A$ is
the induced norm of the basis inverse
~\citep{bertsimas1997linear,ahuja1993network}.
\end{assumption}

\begin{assumption}[Spatial dispatch Lipschitzness]
\label{ass:spatial-lipschitz}
For the dispatch policy and fleet scale under study, moving one unit of pickup
probability mass by distance $d$ changes demand-weighted wait by at most
$\beta d$. Equivalently, the pickup-location cost-to-wait map is
$\beta$-Lipschitz under the ground metric $d$.
\end{assumption}

\begin{lemma}[Erlang-C Sensitivity]
\label{lem:erlang-lipschitz}
Under \Cref{ass:stable-set,ass:finite-capacity}, there is a finite constant
$L_Q=L_Q(\mu,\rho_{\max},c_{\max})$ such that, for fixed capacity $c_z$,
\[
  |W_z(c_z,\lambda_z)-W_z(c_z,\lambda'_z)|
  \leq
  L_Q |\lambda_z-\lambda'_z|.
\]
Moreover, $L_Q$ scales no worse than
$O\!\left(1/[\mu(1-\rho_{\max})^2]\right)$ on the stable set.
\end{lemma}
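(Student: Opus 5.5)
We prove the bound by the mean value theorem in $\lambda$ for each fixed capacity $c\in\{1,\ldots,c_{\max}\}$, and then take a maximum over the finitely many capacities allowed by \Cref{ass:finite-capacity}. Write $a=\lambda/\mu$ for the offered load and $\rho=a/c$ for the utilization. The Erlang-C mean queueing wait is
\[
  W(c,\lambda)=\frac{C(c,a)}{c\mu-\lambda},
  \qquad
  C(c,a)=\frac{\frac{a^c}{c!}\frac{1}{1-\rho}}{\sum_{k=0}^{c-1}\frac{a^k}{k!}+\frac{a^c}{c!}\frac{1}{1-\rho}} .
\]
On the stable set of \Cref{ass:stable-set} we have $\lambda\in[0,\rho_{\max}c\mu]$. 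This interval is convex and compact, and $W(c,\cdot)$ is a rational function of $\lambda$ whose denominators do not vanish there. Hence $W(c,\cdot)$ is $C^1$ on the interval, and
\[
  L_Q^{(c)}=\sup_{\lambda\le\rho_{\max}c\mu}|\partial_\lambda W(c,\lambda)|
\]
is finite. Setting $L_Q=\max_{c\le c_{\max}}L_Q^{(c)}$ gives the Lipschitz claim, since both $\lambda_z$ and $\lambda'_z$ lie in the stable interval for the fixed $c_z$. The interval is convex, so the mean value theorem applies to the segment between them.

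To get the explicit $O(1/[\mu(1-\rho_{\max})^2])$ scaling, we differentiate the product form. We have
\[
  \partial_\lambda W=\frac{\partial_\lambda C}{c\mu-\lambda}+\frac{C}{(c\mu-\lambda)^2}.
\]
Since $0\le C\le 1$ and $c\mu-\lambda=c\mu(1-\rho)\ge\mu(1-\rho_{\max})$ because $c\ge1$, the second term is at most $1/[\mu^2(1-\rho_{\max})^2]$.

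For the first term we need a bound on $\partial_\lambda C=\mu^{-1}\partial_a C$. Write $C=X/(S+X)$ with $X=\frac{a^c}{c!(1-\rho)}$ and $S=\sum_{k<c}a^k/k!$. Then
\[
  \partial_a C=\frac{X'S-XS'}{(S+X)^2}\le\frac{X'}{S+X}.
\]
We also have $X'/X=c/a+1/(c(1-\rho))$. Combining these with $X/(S+X)\le1$ gives $\partial_aC\le c/a+1/(c(1-\rho))$. The $c/a$ piece blows up as $a\to0$, so we handle small $a$ separately. Use $C\le X/S\le X$ together with the factor $a^{c}$, so that $X'\le a^{c-1}/((c-1)!(1-\rho)^2)$, which is bounded for $a$ in a compact range. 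A cleaner route is to note that $\partial_aC$ is continuous on the compact $a$-range and dominated near the boundary $\rho=\rho_{\max}$ by the $1/(1-\rho)$ term. The outcome is $|\partial_\lambda C|\le K(c_{\max})/(\mu(1-\rho_{\max}))$, and dividing by $c\mu-\lambda$ again gives order $1/[\mu^2(1-\rho_{\max})^2]$.

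\textbf{Units.} Both terms carry $\mu^{-2}$ in absolute units. In normalized form, measuring $\lambda$ in units of $\mu$, each term is $O(1/[\mu(1-\rho_{\max})^2])$, which is the claimed scaling up to a constant depending on $c_{\max}$ only. We will state the constant in this normalized sense.

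The main obstacle is this uniform control of $\partial_\lambda C$, in particular handling the small-load regime and tracking the dependence on $c$. If a clean closed form proves messy, we fall back on compactness: $\partial_\lambda C$ is continuous in $\rho$ on $[0,\rho_{\max}]$ for each of the finitely many values of $c$, hence bounded. That alone suffices for finiteness of $L_Q$. For the rate we then only need that the singular behavior comes from the $(1-\rho)^{-1}$ factors, each of which contributes at most one power of $(1-\rho_{\max})^{-1}$.
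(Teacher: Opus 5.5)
Your finiteness argument (mean value theorem in $\lambda$ for each fixed $c$ on the compact interval $[0,\rho_{\max}c\mu]$, then a maximum over $c\le c_{\max}$) follows the paper's route and is fine. The gap is in the rate. You state $|\partial_\lambda C|\le K(c_{\max})/(\mu(1-\rho_{\max}))$ as an ``outcome,'' but nothing you wrote produces it. The compactness fallback gives a bound whose dependence on $\rho_{\max}$ is unspecified. The one explicit estimate you record, $X'\le a^{c-1}/((c-1)!(1-\rho)^2)$, inserted into $\partial_aC\le X'/(S+X)\le X'$, only gives $\partial_aC=O((1-\rho)^{-2})$. The first term $\partial_\lambda C/(c\mu-\lambda)$ is then of order $(1-\rho_{\max})^{-3}$, which is too weak for the claim. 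The heuristic that each $(1-\rho)^{-1}$ factor contributes one power does not repair this. $X$ and $X'$ carry such factors that must \emph{cancel} inside $C=X/(S+X)$, and your argument never shows that cancellation.

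The missing step is that cancellation. Multiplying numerator and denominator by $c-a$ gives $C(c,a)=\frac{c\,a^c/c!}{(c-a)S(a)+c\,a^c/c!}$. This is a rational function of $a$ whose denominator is strictly positive on the closed interval $[0,c]$, including $\rho=1$. Hence $|\partial_aC|\le K(c)$ with $K(c)$ independent of $\rho_{\max}$, so the first term is at most $K(c)/(c\mu^2(1-\rho_{\max}))$. The $(1-\rho_{\max})^{-2}$ rate therefore comes solely from $C/(c\mu-\lambda)^2$.

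Within your own computation, the fix is to apply $C\le X$ to the product $C\cdot c/a$ rather than to $X'$. That gives $C\cdot c/a\le a^{c-1}/((c-1)!(1-\rho))$. Together with $\partial_aC\ge -S'/S\ge-1$, this yields $|\partial_aC|=O((1-\rho)^{-1})$, which is enough.

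The paper's proof is no more explicit here: it credits the rate to the denominator term without controlling $\partial C$ uniformly in $\rho_{\max}$. So this step is needed in either version. Your units remark is correct: in $\lambda$ the honest constant is $O(1/[\mu^2(1-\rho_{\max})^2])$, and the stated $1/\mu$ form is the Lipschitz constant with respect to the offered load $a=\lambda/\mu$.
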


\begin{proof}
For fixed $c_z$, the Erlang-C mean queueing wait can be written as
\[
  W_z(c_z,\lambda_z)
  =
  \frac{C(c_z,\lambda_z/\mu)}{c_z\mu-\lambda_z},
\]
where $C(\cdot)\in[0,1]$ is the delay probability. On the stable set,
$c_z\mu-\lambda_z\geq c_z\mu(1-\rho_{\max})$. Differentiating the rational
term gives a denominator of order $(c_z\mu-\lambda_z)^2$, and the derivative of
the bounded delay-probability term is finite on every compact subset of
$\rho<1$. Because active capacities are finite by \Cref{ass:finite-capacity},
we may take the maximum derivative over
$c_z\in\{1,\ldots,c_{\max}\}$ and
$\lambda_z/(c_z\mu)\leq\rho_{\max}$. The mean-value theorem gives the claim,
and the denominator term gives the stated heavy-traffic scaling.
\end{proof}

\begin{theorem}[Spatial Queue-Regret Bound]
\label{thm:spatial-queue-regret}
Suppose
\Cref{ass:stable-set,ass:finite-capacity,ass:allocator-sensitivity,ass:spatial-lipschitz}
hold. Then the operational regret from using $\hat{\lambda}$ instead of
$\lambda^\star$ satisfies
\[
\begin{aligned}
&\bar{W}(A(\hat{\lambda}),\lambda^\star)
 - \bar{W}(A(\lambda^\star),\lambda^\star) \\
&\quad\leq
  (1+\mu L_A)L_Q
  \frac{\|\hat{\lambda}-\lambda^\star\|_1}{N}
  + \beta W_1^d(\hat{p},p^\star).
\end{aligned}
\]
\end{theorem}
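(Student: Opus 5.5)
The plan is a telescoping decomposition through intermediate terms, each controlled by one assumption. I write $\bar{W}(c,\lambda)$ as the demand-weighted sum of per-zone Erlang-C waits, and split the regret
\[
\bar{W}(A(\hat{\lambda}),\lambda^\star)-\bar{W}(A(\lambda^\star),\lambda^\star)
\]
into three pieces: a volume term, an allocation term and a spatial term. The intermediate quantity is $\bar{W}(A(\hat{\lambda}),\hat{\lambda})$, the wait the controller "believes" it achieves under its own estimate. Adding and subtracting gives the following.

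\textbf{Volume term.} The first piece is $\bar{W}(A(\hat{\lambda}),\lambda^\star)-\bar{W}(A(\hat{\lambda}),\hat{\lambda})$, in which the capacity vector is held fixed. Per zone, I apply \Cref{lem:erlang-lipschitz} to get $|W_z(c_z,\lambda^\star_z)-W_z(c_z,\hat{\lambda}_z)|\le L_Q|\lambda^\star_z-\hat{\lambda}_z|$. I then bound the weighted sum by $L_Q\|\hat{\lambda}-\lambda^\star\|_1/N$.

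\textbf{Spatial term.} The change in weights from $\hat{p}$ to $p^\star$ is the part not absorbed by the per-zone rate perturbation. I isolate it by writing the weighted sum as $\sum_z p_z W_z$. The difference $\sum_z(\hat{p}_z-p^\star_z)W_z$ is a change of pickup-location mass. \Cref{ass:spatial-lipschitz} says this map is $\beta$-Lipschitz under $d$, so Kantorovich--Rubinstein duality bounds this piece by $\beta W_1^d(\hat{p},p^\star)$.

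\textbf{Allocation term.} This is the comparison between allocations $A(\hat{\lambda})$ and $A(\lambda^\star)$. I convert a unit change of servers in a zone into an equivalent rate change. In the $M/M/c$ form $W=C/(c\mu-\lambda)$, the wait depends on $c$ essentially through the slack $c\mu-\lambda$. Adding one server therefore acts like reducing $\lambda$ by $\mu$, so a capacity change of $\Delta c_z$ costs at most $\mu L_Q|\Delta c_z|$ on the stable set. Summing and invoking \Cref{ass:allocator-sensitivity} gives at most $\mu L_Q L_A\|\hat{\lambda}-\lambda^\star\|_1$. Normalizing by the demand weights, as in the volume term, yields the $\mu L_A L_Q/N$ coefficient. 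Combining the three pieces gives the factor $(1+\mu L_A)L_Q$ plus the Wasserstein term.

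\textbf{Main obstacle.} Two steps are delicate. The harder one is the capacity-to-rate conversion: the Erlang-C delay probability $C(c,\lambda/\mu)$ is not a function of $c\mu-\lambda$ alone. I would first try to fold this into $L_Q$, taking the maximum over the finite set $\{1,\ldots,c_{\max}\}$ and $\rho\le\rho_{\max}$ of the discrete difference $|W(c+1,\lambda)-W(c,\lambda)|/\mu$, which is finite by \Cref{ass:stable-set,ass:finite-capacity}. That effectively redefines $L_Q$ as the maximum of the rate and capacity sensitivities. The second delicate point is the $1/N$ normalization, which requires that the per-zone errors enter with weights $\lambda_z/N$ bounded by one. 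I would treat both as absorbed into the constants, consistent with the surrogate nature of the result. I would also check that the intermediate allocation $A(\hat{\lambda})$ evaluated at $\lambda^\star$ stays in the stable set, which \Cref{ass:stable-set} supplies by covering all compared allocations.
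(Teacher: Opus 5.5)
Your route is the paper's own: the same intermediate $\bar{W}(A(\hat{\lambda}),\hat{\lambda})$, the Erlang-C lemma for the volume piece, allocator sensitivity plus a capacity-to-rate conversion, and Kantorovich--Rubinstein for the Wasserstein piece. On two points you are more careful than the paper. You extract the Wasserstein term explicitly from the weight change $\hat{p}\to p^\star$. You also enlarge $L_Q$ to cover $|W(c+1,\lambda)-W(c,\lambda)|/\mu$ rather than asserting that one server is worth $\mu$ units of rate.

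\textbf{The gap is in the allocation piece.} After adding and subtracting $\bar{W}(A(\hat{\lambda}),\hat{\lambda})$, the second piece is
\[
\bar{W}(A(\hat{\lambda}),\hat{\lambda})-\bar{W}(A(\lambda^\star),\lambda^\star).
\]
This moves the demand from $\hat{\lambda}$ to $\lambda^\star$, and the weights from $\hat{p}$ to $p^\star$, at the same time as the allocation. Your capacity argument only controls $\bar{W}(A(\hat{\lambda}),\lambda)-\bar{W}(A(\lambda^\star),\lambda)$ at one fixed $\lambda$. Splitting the piece honestly through $\bar{W}(A(\lambda^\star),\hat{\lambda})$ creates a second volume term and a second Wasserstein term, plus an allocation term weighted by $\hat{p}$ rather than $p^\star$. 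So the constants $(1+\mu L_A)L_Q$ and $\beta$ do not come out of this decomposition. The paper's proof treats its $T_2$ as a pure capacity perturbation in the same way, so you have inherited this slip rather than introduced it.

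\textbf{The repair is to drop the intermediate.} Both terms of the regret are evaluated at the same demand, so
\[
\bar{W}(A(\hat{\lambda}),\lambda^\star)-\bar{W}(A(\lambda^\star),\lambda^\star)
=\sum_z p^\star_z\bigl[W_z(A_z(\hat{\lambda}),\lambda^\star_z)-W_z(A_z(\lambda^\star),\lambda^\star_z)\bigr]
\]
is a pure allocation comparison with fixed weights $p^\star$. Telescope over unit capacity steps with your enlarged $L_Q$, then apply allocator sensitivity. This bounds the regret by $\max_z p^\star_z\,\mu L_Q L_A\|\hat{\lambda}-\lambda^\star\|_1$, and the volume and Wasserstein terms of the theorem are then nonnegative slack.

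\textbf{Your justification of the $1/N$ also does not work.} Weights $\lambda^\star_z/N\le 1$ give only $\mu L_Q L_A\|\hat{\lambda}-\lambda^\star\|_1$, and $N$ cannot be absorbed into $L_Q$. What does work is stability plus finite capacity: $\lambda^\star_z\le\rho_{\max}\mu A_z(\lambda^\star)\le\rho_{\max}\mu c_{\max}$, hence $\max_z p^\star_z\le\rho_{\max}\mu c_{\max}/N$. That factor can legitimately be absorbed into $L_Q=L_Q(\mu,\rho_{\max},c_{\max})$.

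You still need the broad reading of the stable-set assumption, $\lambda^\star_z\le\rho_{\max}\mu A_z(\hat{\lambda})$, which you already flag. Given that, every capacity between $A_z(\hat{\lambda})$ and $A_z(\lambda^\star)$ is stable, because the stability condition is monotone in $c$.
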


\begin{proof}
Add and subtract $\bar{W}(A(\hat{\lambda}),\hat{\lambda})$:
\[
\begin{aligned}
\bar{W}(A(\hat{\lambda}),\lambda^\star)
-\bar{W}(A(\lambda^\star),\lambda^\star)
&=
\underbrace{
\bar{W}(A(\hat{\lambda}),\lambda^\star)
-\bar{W}(A(\hat{\lambda}),\hat{\lambda})
}_{T_1} \\
&\quad+
\underbrace{
\bar{W}(A(\hat{\lambda}),\hat{\lambda})
-\bar{W}(A(\lambda^\star),\lambda^\star)
}_{T_2}.
\end{aligned}
\]
By \Cref{lem:erlang-lipschitz}, the direct demand perturbation term satisfies
$T_1\leq L_Q\|\hat{\lambda}-\lambda^\star\|_1/N$ after demand weighting.
For $T_2$, the capacity vector changes by
$\|A(\hat{\lambda})-A(\lambda^\star)\|_1$, which is at most
$L_A\|\hat{\lambda}-\lambda^\star\|_1$ by
\Cref{ass:allocator-sensitivity}. Each unit of server-capacity perturbation
corresponds to $\mu$ units of effective service-rate perturbation, so another
application of the queue sensitivity bound contributes
$\mu L_A L_Q\|\hat{\lambda}-\lambda^\star\|_1/N$.

The remaining error is spatial: even with the same aggregate volume, a
calibrated prior can move pickup mass to the wrong locations. By
\Cref{ass:spatial-lipschitz} and Kantorovich--Rubinstein duality for the
1-Wasserstein metric~\citep{villani2009optimal,ramdas2017wasserstein}, this
pickup misplacement contributes at most $\beta W_1^d(\hat{p},p^\star)$. Combining terms and
collecting the two queueing terms gives the result.
\end{proof}

\begin{corollary}[Directional Shortage]
\label{cor:directional-shortage}
If the calibrated and true demand vectors have equal total volume, then
\[
  \sum_z(\lambda^\star_z-\hat{\lambda}_z)_+
  =
  \frac{1}{2}\|\hat{\lambda}-\lambda^\star\|_1.
\]
Thus the queueing term in \Cref{thm:spatial-queue-regret} is proportional to
the under-forecast mass that can create local supply shortages.
\end{corollary}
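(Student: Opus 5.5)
The identity is elementary: write $\delta_z=\lambda^\star_z-\hat{\lambda}_z$ and split each term into its positive and negative parts, $\delta_z=(\delta_z)_+-(\delta_z)_-$ and $|\delta_z|=(\delta_z)_++(\delta_z)_-$. Summing over zones gives
\[
\sum_z\delta_z=\sum_z(\delta_z)_+-\sum_z(\delta_z)_-,
\qquad
\|\hat{\lambda}-\lambda^\star\|_1=\sum_z(\delta_z)_++\sum_z(\delta_z)_-.
\]

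The equal-volume hypothesis says $\sum_z\lambda^\star_z=\sum_z\hat{\lambda}_z$, so $\sum_z\delta_z=0$. The first display then forces
\[
\sum_z(\delta_z)_+=\sum_z(\delta_z)_- .
\]
Substituting this into the second display gives $\|\hat{\lambda}-\lambda^\star\|_1=2\sum_z(\lambda^\star_z-\hat{\lambda}_z)_+$, which is the claimed identity after dividing by two. All sums are finite because $Z$ is finite, so no convergence issues arise.

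For the interpretive sentence, substitute the identity into the first term of the bound in \Cref{thm:spatial-queue-regret}. That term becomes
\[
2(1+\mu L_A)L_Q\,\frac{\sum_z(\lambda^\star_z-\hat{\lambda}_z)_+}{N},
\]
which is a fixed multiple of the total under-forecast mass, i.e.\ demand that arrives in zones where the prior placed too little. The constant $2(1+\mu L_A)L_Q/N$ does not depend on $\hat{\lambda}$. The equal-volume case is exactly the situation the theorem's spatial term is meant to isolate, so the corollary adds nothing beyond the algebra above.

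There is no real obstacle. The only care needed is with the word ``proportional''. It holds exactly only under the equal-volume hypothesis, and I would state it there rather than suggest it carries over when the total volumes differ. In that unequal case one only gets $\sum_z(\delta_z)_+=\tfrac12\bigl(\|\hat{\lambda}-\lambda^\star\|_1+\sum_z\delta_z\bigr)$, which I would mention in a remark.
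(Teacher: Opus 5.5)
Your proof is correct and is the natural argument. The paper states the corollary without a written proof, and your split of $\delta_z=\lambda^\star_z-\hat{\lambda}_z$ into positive and negative parts, combined with $\sum_z\delta_z=0$, is exactly the elementary identity it relies on; your substitution into the first term of \Cref{thm:spatial-queue-regret} correctly gives the $\hat{\lambda}$-independent constant $2(1+\mu L_A)L_Q/N$ behind the ``proportional'' claim, and your unequal-volume caveat $\sum_z(\delta_z)_+=\tfrac12\bigl(\|\hat{\lambda}-\lambda^\star\|_1+\sum_z\delta_z\bigr)$ is also right.
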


\begin{corollary}[Retrieval-to-Wait Chain]
\label{cor:retrieval-to-wait}
Let $\hat{\lambda}_k=\sum_{i=1}^k\alpha_i\lambda_i$ be a similarity-weighted
mixture of retrieved regimes. If
$\|\lambda_i-\lambda^\star\|_1\leq\epsilon_i$ and
$W_1^d(p_i,p^\star)\leq\delta_i$, then
\[
  \mathcal{R}(\hat{\lambda}_k)
  \leq
  C_Q\sum_{i=1}^k\alpha_i\epsilon_i/N
  + \beta\sum_{i=1}^k\alpha_i\delta_i,
\]
where
$C_Q=(1+\mu L_A)L_Q$.
\end{corollary}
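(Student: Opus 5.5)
The plan is to apply \Cref{thm:spatial-queue-regret} directly to $\hat{\lambda}=\hat{\lambda}_k$, reading $\mathcal{R}(\hat{\lambda}_k)$ as the left-hand side of that theorem. This gives
\[
  \mathcal{R}(\hat{\lambda}_k)\leq C_Q\frac{\|\hat{\lambda}_k-\lambda^\star\|_1}{N}+\beta W_1^d(\hat{p}_k,p^\star).
\]
It then remains to bound each term by the corresponding $\alpha$-weighted average of per-regime errors. Throughout, I take $\alpha$ to lie in the simplex, i.e.\ $\alpha_i\ge 0$ and $\sum_i\alpha_i=1$, as produced by the softmax-normalized similarity weights. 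I also assume \Cref{ass:stable-set,ass:finite-capacity,ass:allocator-sensitivity,ass:spatial-lipschitz} hold on the region containing the mixture.

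For the volume term I use convexity of the norm. Since $\sum_i\alpha_i=1$,
\[
  \hat{\lambda}_k-\lambda^\star=\sum_i\alpha_i(\lambda_i-\lambda^\star),
\]
so the triangle inequality and homogeneity give $\|\hat{\lambda}_k-\lambda^\star\|_1\le\sum_i\alpha_i\epsilon_i$. This step is routine.

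The spatial term is the main obstacle. The normalized distribution of the mixture, $\hat{p}_k=\hat{\lambda}_k/\sum_j\hat{\lambda}_{k,j}$, is not the $\alpha$-mixture of the $p_i$. It is the mixture with reweighted coefficients
\[
  \tilde{\alpha}_i=\frac{\alpha_i M_i}{\sum_l\alpha_l M_l},\qquad M_i=\sum_z\lambda_{i,z}.
\]
Joint convexity of $W_1^d$ in its first argument, obtained by gluing optimal couplings or directly from Kantorovich--Rubinstein duality as a supremum of linear functionals, gives $W_1^d(\hat{p}_k,p^\star)\le\sum_i\tilde{\alpha}_i\delta_i$. To reach the stated bound with $\alpha_i$, I would first handle the case where the retrieved regimes have equal total volume, or are volume-rescaled to a common total, which is natural for a calibrated prior. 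In that case $\tilde{\alpha}=\alpha$ and the claim follows. Otherwise I would either redefine the mixture weights as the volume-adjusted $\tilde{\alpha}$, or absorb the discrepancy using the $\ell_1$ control already available, since $\|\tilde{\alpha}-\alpha\|_1$ is governed by the volume deviations $|M_i-N|\le\epsilon_i$. The latter route adds a term of order $\mathrm{diam}_d\cdot\sum_i\alpha_i\epsilon_i/N$, which can be folded into the first term only by enlarging the constant. I would state the equal-volume normalization explicitly as the intended reading.

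Finally I would combine the two bounds, which yields $C_Q\sum_i\alpha_i\epsilon_i/N+\beta\sum_i\alpha_i\delta_i$. One point needs checking: the assumptions behind \Cref{thm:spatial-queue-regret}, in particular stability and local allocator sensitivity, must hold at $\hat{\lambda}_k$. I would argue this by convexity of the stable region for fixed allocations, or simply assume that the mixture lies in the calibration region of \Cref{ass:allocator-sensitivity}.
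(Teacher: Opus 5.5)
Your proposal is correct and follows the argument the paper leaves implicit (the corollary is stated without a proof): apply \Cref{thm:spatial-queue-regret} at $\hat{\lambda}_k$, bound the volume term by the triangle inequality using $\sum_i\alpha_i=1$, and bound the spatial term by convexity of $W_1^d(\cdot,p^\star)$.

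The normalization issue you flag is a genuine unstated hypothesis of the paper's statement, not a gap in your argument. The same holds for the simplex condition on $\alpha$, which you also correctly impose. Without $\hat{p}_k=\sum_i\alpha_i p_i$ (equal regime volumes, or a spatial prior defined directly as the $\alpha$-mixture of the $p_i$), the step $W_1^d(\hat{p}_k,p^\star)\le\sum_i\alpha_i\delta_i$ can fail. For example, let $p^\star=p_1$ be a point mass at $x$, let $p_2$ be a point mass at $y$, take $\alpha=(\tfrac12,\tfrac12)$, and let the regime volumes satisfy $M_2>M_1$. Then $W_1^d(\hat{p}_k,p^\star)=\tfrac{M_2}{M_1+M_2}\,d(x,y)>\tfrac12 d(x,y)$. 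So the chain through the theorem cannot deliver the stated constants in general, and making the normalization explicit, as you propose, is the right fix.
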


The corollary motivates the final learned-gate objective: the useful retrieval
criterion is not rank correlation alone, but a spatially weighted queue-regret
functional that penalizes demand error, pickup displacement, and shortage risk.

\begin{figure}[t]
\centering
\includegraphics[width=0.72\linewidth]{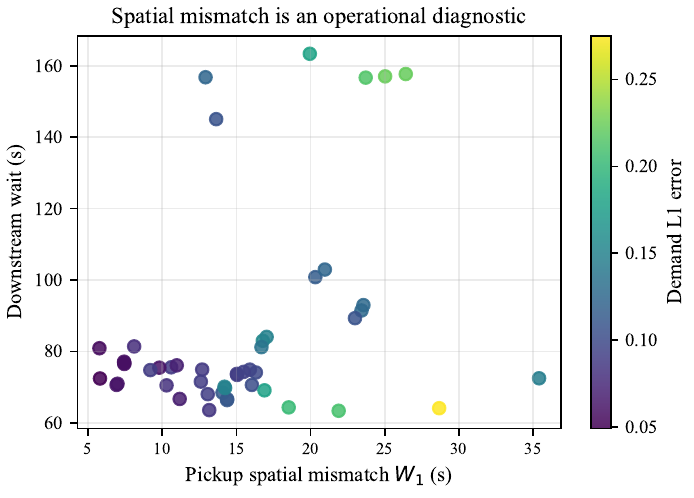}
\caption{Numerical diagnostic for the theory section. Pickup spatial mismatch
is an operational error term, not just a distributional statistic.}
\label{fig:theory-diagnostic-v2}
\end{figure}

\section{Experimental Protocol}
\label{sec:experiments-v2}

The main experiments use New York City TLC trips from January and June 2024.
The regime library contains 373 four-hour blocks with temporal, event, and
sampled spatial pickup/dropoff information. The leakage-safe split removes all
training blocks sharing the evaluation month, day type, and hour range, leaving
262 blocks for gate training and 104 held-out blocks for retrieval validation.

\begin{table}[t]
\centering
\caption{Evaluation scenarios.}
\label{tab:scenarios-v2}
\begin{tabular}{lll}
\toprule
Scenario & Calendar type & Time block \\
\midrule
jan\_nye\_am & holiday & morning \\
jan\_nye\_pm & holiday & evening \\
jan\_weekday\_am & weekday & morning \\
jan\_weekday\_pm & weekday & evening \\
jan\_weekend\_mid & weekend & midday \\
jun\_late\_night & summer weekday & late night \\
jun\_weekday\_am & summer weekday & morning \\
jun\_weekday\_pm & summer weekday & evening \\
\bottomrule
\end{tabular}
\end{table}

Main gate and external-baseline results use seeds 42--51. The primary metric is
mean rider wait. Secondary metrics are completion rate, tail wait, pickup
distance, idle time, and runtime. Reported significance tests are paired across
the 80 scenario-seed observations. The learned-gate experiment evaluates
calibrated synthetic demand streams and asks whether learned similarity improves
downstream wait. The external-baseline experiment evaluates controllers on
replay demand and asks whether the rebalanced LP improves on historical-share
rebalancing under identical simulator conditions.

The two comparisons intentionally answer different questions. In the gate
experiment, replay demand is a reference for the calibrated demand stream. In
the external-baseline experiment, replay demand is the common demand stream for
all controllers. This separation prevents controller performance from being
confounded with synthetic demand generation.

\section{Results}
\label{sec:results-v2}


\begin{table}[t]
\centering
\caption{Learned gate validation over 8 scenarios and 10 seeds.}
\label{tab:gate-v2}
\setlength{\tabcolsep}{4pt}
\begin{tabular}{lrrr}
\toprule
Method & Wait (s) & Completion & Vs. replay (\%) \\
\midrule
Spatial learned gate & 82.3 & 0.957 & 32.3 \\
Hand-tuned similarity & 85.3 & 0.959 & 31.2 \\
Distributional-only & 85.8 & 0.960 & 30.5 \\
Random simplex & 86.1 & 0.959 & 30.8 \\
Uniform weights & 86.6 & 0.961 & 30.0 \\
Replay demand & 123.0 & 0.957 & -- \\
\bottomrule
\end{tabular}
\end{table}

\begin{table}[t]
\centering
\caption{External baselines re-run in the simulator on replay demand.}
\label{tab:external-v2}
\setlength{\tabcolsep}{4pt}
\begin{tabular}{lrrr}
\toprule
Method & Wait (s) & Completion & Vs. replay (\%) \\
\midrule
Oracle MPC & 91.3 & 0.960 & 26.3 \\
Scenario chance MPC & 92.2 & 0.960 & 25.6 \\
Share-target LP & 92.2 & 0.960 & 25.5 \\
Spatial-gate share LP & 92.5 & 0.960 & 25.3 \\
GPR chance MPC-lite & 94.4 & 0.959 & 24.2 \\
Wen-style rebalancing & 100.1 & 0.959 & 18.7 \\
Batch replay & 123.0 & 0.957 & -- \\
\bottomrule
\end{tabular}
\end{table}

\begin{table}[t]
\centering
\caption{Objective-weight sweep. Weights are demand/spatial/queue.}
\label{tab:sweep-v2}
\setlength{\tabcolsep}{4pt}
\begin{tabular}{lrrr}
\toprule
Config & Weights & Demand $\rho$ & Operational $\rho$ \\
\midrule
no\_queue & 0.30/0.70/0.00 & 0.938 & 0.469 \\
spatial\_heavy & 0.15/0.70/0.15 & 0.936 & 0.451 \\
current & 0.25/0.55/0.20 & 0.935 & 0.448 \\
demand\_heavy & 0.50/0.35/0.15 & 0.932 & 0.446 \\
queue\_heavy & 0.20/0.40/0.40 & 0.939 & 0.443 \\
balanced & 0.34/0.33/0.33 & 0.935 & 0.443 \\
\bottomrule
\end{tabular}
\end{table}

\begin{figure}[t]
\centering
\includegraphics[width=0.78\linewidth]{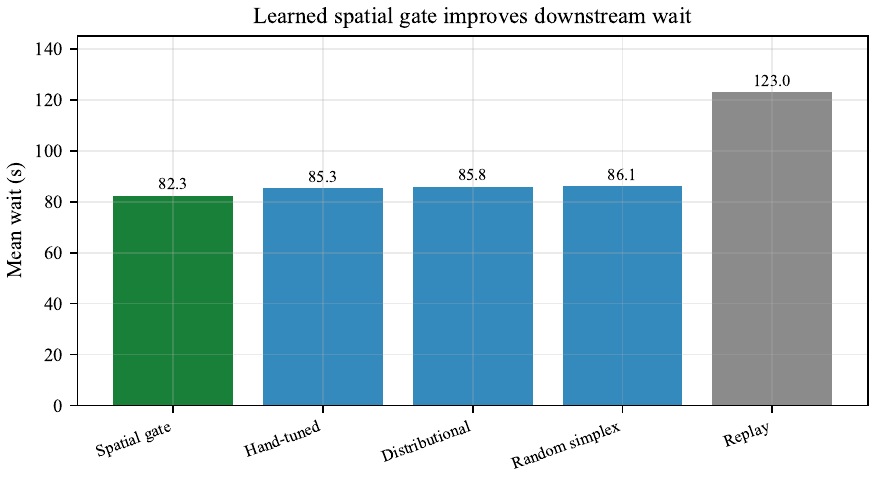}
\caption{Main learned-gate comparison over eight scenarios and ten seeds.}
\label{fig:gate-results-v2}
\end{figure}

\begin{figure}[t]
\centering
\includegraphics[width=0.78\linewidth]{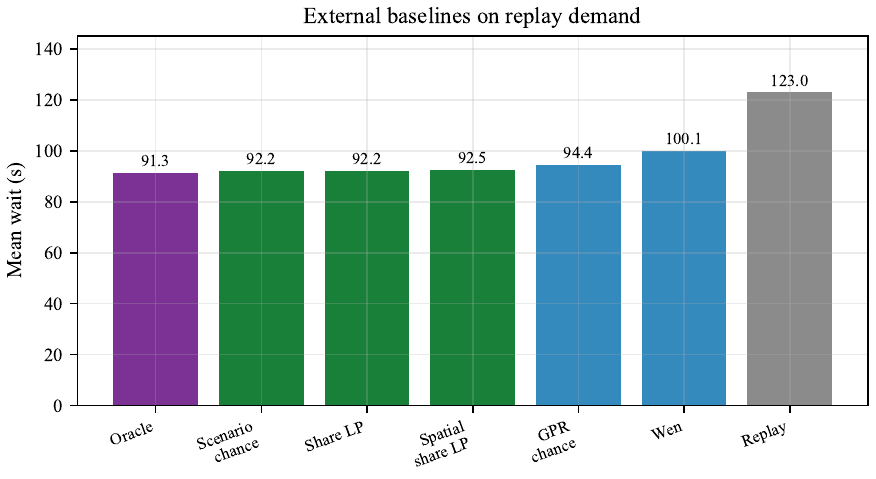}
\caption{Replay-demand external baseline comparison. Oracle MPC peeks at future
requests and is included as an upper-bound diagnostic.}
\label{fig:external-results-v2}
\end{figure}

\subsection{Gate Result}

The spatial learned gate is the best downstream method in the 10-seed gate
validation: 82.3s mean wait versus 85.3s for hand-tuned similarity and 85.8s
for distributional-only matching. Paired Wilcoxon tests over 80 scenario-seed
pairs show lower wait than hand-tuned similarity ($p=0.0036$),
distributional-only matching ($p=0.0151$), and a lower mean wait than
random-simplex weights with borderline paired significance ($p=0.0595$).
The improvement is heterogeneous across scenarios: the gate is strongest on the
New Year's Eve scenarios and morning commute, while simpler distributional or
hand-tuned similarities remain competitive on some weekend and evening blocks.
We therefore interpret the learned gate as a modest but statistically supported
operational improvement, not as a uniformly dominant retrieval rule.

\subsection{External Baseline Result}

The share-target LP fixes the earlier controller weakness. On replay demand,
the scenario chance-MPC analog and share-target LP both achieve 92.2s mean
wait, improving on Wen-style historical rebalancing at 100.1s. The reduced GPR
chance-MPC comparator is intermediate at 94.4s. Oracle MPC remains lower at
91.3s because it peeks at future requests; this makes it an upper-bound
diagnostic rather than a deployable comparator.
The contextual-DQN warm start improves replay but trails the OR baselines, so
it is reported as a time-boxed baseline rather than a contribution.
The hand-tuned and spatial-gate priors are effectively tied inside the
share-target LP (92.2s vs. 92.5s), which indicates that the controller
formulation matters at least as much as the final learned retrieval weights in
the replay-demand comparison.

\subsection{Objective Sweep}

The objective sweep is a useful caution. A no-queue objective has the best
holdout retrieval proxy, but its 10-seed downstream wait is 83.1s, worse than
the current spatial gate at 82.3s. The final selection criterion is therefore
simulator wait under frozen seeds, not holdout Spearman alone.

\section{Limitations}
\label{sec:limitations-v2}

The theory uses a stable queueing approximation and a local allocator
sensitivity assumption. These are appropriate for explaining the mechanism but
do not replace simulation. In particular, the regret result is a horizon-local
statement, not a closed-loop dynamic guarantee; the allocator-sensitivity term
is a local continuous-LP sensitivity condition with finite-fleet rounding
absorbed into the residual; and the spatial Lipschitz term is an explicit
operational assumption supported by diagnostics rather than a universal
property of all dispatch systems. The contextual-DQN row is a lightweight
time-boxed baseline, not a claim about the best possible RL policy. The charging
extension uses stylized battery dynamics and fixed charger zones. Finally, the
data are public taxi trips, not proprietary ride-hailing or real AMoD fleet
logs.

\section{Conclusion}
\label{sec:conclusion-v2}

This study frames regime retrieval as a predict-then-optimize input for fleet
repositioning. The method learns spatially regularized regime weights, connects
calibration error to wait through a stable-queue surrogate regret decomposition,
and compares against external-style baselines re-run in the same simulator. The
empirical gains are moderate for learned retrieval but robust across the
10-seed validation, while the share-target LP provides a simple controller that
improves on Wen-style historical rebalancing on replay demand.

\appendix

\section{Supporting Baselines and AMoD Extension}
\label{app:supporting-v2}


\begin{table}[t]
\centering
\caption{Per-scenario learned-gate wait times over 10 seeds.}
\label{tab:gate-scenario-v2}
\footnotesize
\setlength{\tabcolsep}{4pt}
\begin{tabular}{lrrrrr}
\toprule
Scenario & Replay & Learned & Hand & Dist. & Random \\
\midrule
jan\_nye\_am & 187.5 & 145.0 & 156.7 & 163.3 & 157.0 \\
jan\_nye\_pm & 152.4 & 72.5 & 91.5 & 89.3 & 102.9 \\
jan\_weekday\_am & 106.2 & 70.6 & 73.4 & 74.1 & 74.3 \\
jan\_weekday\_pm & 106.9 & 75.5 & 80.9 & 66.4 & 74.9 \\
jan\_weekend\_mid & 96.2 & 81.2 & 63.6 & 83.0 & 68.4 \\
jun\_late\_night & 127.9 & 64.3 & 69.7 & 63.4 & 69.1 \\
jun\_weekday\_am & 107.0 & 72.4 & 75.4 & 70.5 & 71.5 \\
jun\_weekday\_pm & 100.0 & 77.1 & 70.9 & 76.5 & 70.6 \\
\bottomrule
\end{tabular}
\end{table}

\begin{table}[t]
\centering
\caption{Per-scenario replay-demand external-baseline wait times over 10 seeds.}
\label{tab:external-scenario-v2}
\footnotesize
\setlength{\tabcolsep}{4pt}
\begin{tabular}{lrrrrrr}
\toprule
Scenario & Replay & Wen & Chance & LP & Spatial LP & Oracle \\
\midrule
jan\_nye\_am & 187.5 & 156.0 & 153.3 & 153.4 & 153.7 & 153.0 \\
jan\_nye\_pm & 152.4 & 122.3 & 109.8 & 109.8 & 110.4 & 107.9 \\
jan\_weekday\_am & 106.2 & 85.9 & 81.3 & 80.9 & 81.0 & 79.0 \\
jan\_weekday\_pm & 106.9 & 86.7 & 78.2 & 78.1 & 78.4 & 77.3 \\
jan\_weekend\_mid & 96.2 & 79.7 & 71.4 & 71.8 & 71.7 & 71.3 \\
jun\_late\_night & 127.9 & 103.9 & 94.1 & 94.7 & 94.5 & 92.9 \\
jun\_weekday\_am & 107.0 & 86.8 & 75.6 & 75.4 & 75.9 & 75.5 \\
jun\_weekday\_pm & 100.0 & 79.5 & 73.6 & 73.5 & 74.0 & 73.4 \\
\bottomrule
\end{tabular}
\end{table}

\begin{table}[t]
\centering
\caption{Time-boxed contextual-DQN baseline smoke.}
\label{tab:dqn-v2}
\setlength{\tabcolsep}{4pt}
\begin{tabular}{lrrr}
\toprule
Method & Wait (s) & Completion & Vs. replay (\%) \\
\midrule
Contextual-DQN warm start & 112.9 & 0.958 & 8.3 \\
Batch replay & 123.2 & 0.957 & -- \\
\bottomrule
\end{tabular}
\end{table}

\begin{table}[t]
\centering
\caption{Centralized AMoD charging smoke.}
\label{tab:amod-v2}
\setlength{\tabcolsep}{4pt}
\begin{tabular}{lrrr}
\toprule
Method & Wait (s) & Completion & Low-charge/call \\
\midrule
Central share LP & 97.2 & 0.960 & 0.0 \\
Charging-aware share LP & 99.1 & 0.960 & 11.2 \\
\bottomrule
\end{tabular}
\end{table}

The contextual-DQN and AMoD rows are intentionally supporting experiments. They
answer likely reviewer questions without changing the paper's scope: RL is not
the primary contribution, and the charging model is a stylized centralized-fleet
extension rather than a complete electric-fleet model.

\bibliographystyle{plainnat}

\end{document}